\newtheorem{definition}{Definition}
\newtheorem{theorem}{Theorem}
\title{Hilbert Space Embedding for Dirichlet Process Mixtures}
\author{
  Krikamol Muandet \\
  Empirical Inference Department \\ 
  Max Planck Institute for Intelligent Systems \\ 
  T\"{u}bingen, Germany \\
  \href{mailto:krikamol@tuebingen.mpg.de}{\texttt{krikamol@tuebingen.mpg.de}}}
\begin{document}
\maketitle 
 
\begin{abstract} 
  This paper proposes a Hilbert space embedding for Dirichlet Process mixture models via a stick-breaking construction of Sethuraman \cite{Sethuraman94:Stick}. Although Bayesian nonparametrics offers a powerful approach to construct a prior that avoids the need to specify the model size/complexity explicitly, an exact inference is often intractable. On the other hand, frequentist approaches such as kernel machines, which suffer from the model selection/comparison problems, often benefit from efficient learning algorithms. This paper discusses the possibility to combine the best of both worlds by using the Dirichlet Process mixture model as a case study.
\end{abstract}

\section{Dirichlet Process mixture models}

Much of the real-world data cannot be explained by nice simple probability models. Rather, they often come from heterogeneous sources of unknown properties, which require more complex probability models. Mixture modelling is a popular way of representing such heterogeneity, and also forms a basis for many Bayesian probabilistic models. Unfortunately, a long-standing difficulty in mixture modelling is choosing the number of mixture components, i.e., the number of sources from which the data are generated. Dirichlet Process mixture model (DPMM) allows for the apriori unbounded number of components whose values can be inferred from the observed data.

As a basis of DPMM, we first give a formal definition of the Dirichlet Process (DP), taken from \cite{Ferguson73:DP}.

\begin{definition}[Dirichlet Process]
  \label{def:dp}
  A Dirichlet Process is a distribution of a random probability measure $G$ over a measurable space $(\Omega,\mathcal{B})$, such that for any finite partition $(A_1,\ldots,A_r)$ of $\Omega$ (i.e., $\Omega = \coprod_{i=1}^r A_i$, where $\coprod$ means disjoint union and $A_i\in\mathcal{B}$), we have 
  \begin{equation*}
    (G(A_1),\ldots,G(A_r)) \sim Dir(\alpha G_0(A_1),\ldots,\alpha G_0(A_r))
  \end{equation*}
  \noindent where $G(A_i)=\int_{A_i} dG$ and $G_0(A_i)=\int_{A_i} dG_0$ for $i=1,\ldots,r$.
\end{definition}

Generally speaking, the DP is a distribution over probability measures. Each draw $G$ from a DP can be interpreted as a random distribution, whose sample path is probability measure with probability one. The base distribution $G_0$ can be thought of as the mean of the DP, whereas the strength parameter $\alpha$ can be regarded as an inverse-variance.

The DP has received much attention and has been extensively studied in the past few years, especially in Bayesian nonparametrics community. Several scenarios have been proposed to show the existence of the DP. For example, Blackwell and MacQueen used the Polya urn scheme to show that the distributions sampled from a DP are discrete almost surely \cite{Blackwell73:polya}. Equivalent to the extended Polya urn scheme is a Chinese restaurant process (CRP), which is a random process where $n$ customers sit in a Chinese restaurant with an infinite number of tables. Moreover, one may look at draws from a DP as a weighted sum of point masses. This point was made precise by the stick-breaking construction of Sethuraman \cite{Sethuraman94:Stick}. 

In this paper, we resort to this constructive way of forming $G$. It can be described by the generative process:
\begin{equation*} 
  \beta_i \sim \mathrm{Beta}(1,\alpha), \qquad \pi_i = \beta_i\prod_{k=1}^{i-1}(1-\beta_k), \qquad \theta_i \sim G_0, \qquad G = \sum_{i=1}^{\infty}\pi_i\delta_{\theta_i} \enspace .  
\end{equation*}

The following theorem establishes the connection between the stick-breaking construction and the Dirichlet process given in the Definition \ref{def:dp}.
\begin{theorem}
  The stick-breaking construction gives the same probability measure over all random measures on the measurable space $(\Omega,\mathcal{B})$ with the Dirichlet Process with same parameter $\alpha$ and $G_0$. 
\end{theorem}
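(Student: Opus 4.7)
The plan is to follow Sethuraman's fixed-point strategy: identify the law of the stick-broken $G$ as the unique solution of a distributional fixed-point equation, and then verify that a $\mathrm{DP}(\alpha,G_0)$ measure solves the same equation. A preliminary step is to confirm that $G=\sum_{i\ge 1}\pi_i\delta_{\theta_i}$ is a.s.\ a probability measure, i.e.\ $\sum_{i\ge 1}\pi_i=1$ a.s. Since $1-\sum_{i=1}^n\pi_i=\prod_{k=1}^n(1-\beta_k)$, this reduces to $\sum_{k\ge 1}\log(1-\beta_k)=-\infty$ a.s., which follows from the SLLN applied to the i.i.d.\ summands $\log(1-\beta_k)$ with common mean $\mathbb{E}[\log(1-\beta_1)]=-1/\alpha<0$. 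The self-similarity is then obtained by setting $G'=\sum_{i\ge 2}\frac{\pi_i}{1-\beta_1}\delta_{\theta_i}$: this $G'$ is produced by the identical stick-breaking recipe applied to the shifted i.i.d.\ sequence $(\beta_{i+1},\theta_{i+1})_{i\ge1}$, so $G'\stackrel{d}{=}G$ independently of $(\beta_1,\theta_1)$, and by construction
\begin{equation*}
G=\beta_1\delta_{\theta_1}+(1-\beta_1)G' \qquad (\star).
\end{equation*}

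Next I would prove uniqueness of the law solving $(\star)$. Define the operator $T$ on the space $\mathcal{P}(M_1(\Omega))$ of laws of random probability measures by $T(\mathbb{P})=\mathrm{Law}(\beta\delta_\theta+(1-\beta)\mu)$ with $\beta\sim\mathrm{Beta}(1,\alpha)$, $\theta\sim G_0$, $\mu\sim\mathbb{P}$, all independent; equation $(\star)$ then says $\mathrm{Law}(G)$ is a fixed point of $T$. Equipping $\mathcal{P}(M_1(\Omega))$ with a Wasserstein-type metric $d$ built from a bounded metric on $M_1(\Omega)$ (e.g.\ the total-variation or a Prokhorov distance), and coupling two inputs through the shared $(\beta,\theta)$, one finds $d(T\mathbb{P}_1,T\mathbb{P}_2)\le \mathbb{E}[1-\beta]\cdot d(\mathbb{P}_1,\mathbb{P}_2)=\tfrac{\alpha}{1+\alpha}\,d(\mathbb{P}_1,\mathbb{P}_2)$, so $T$ is a strict contraction and Banach's theorem delivers a unique fixed point.

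Finally I would verify that a $\mathrm{DP}(\alpha,G_0)$ measure $\tilde G$ also satisfies $(\star)$. Fixing a measurable partition $(A_1,\ldots,A_r)$ with $p_j=G_0(A_j)$ and letting $J$ be the a.s.\ unique index with $\theta\in A_J$ (so $P(J=j)=p_j$), the corresponding marginal of $\beta\delta_\theta+(1-\beta)\tilde G$ is the random vector $W$ with $W_J=\beta+(1-\beta)\tilde G(A_J)$ and $W_k=(1-\beta)\tilde G(A_k)$ for $k\ne J$. A direct computation using the gamma representation of the Dirichlet---realise $(V_0,V_1,\ldots,V_r)\sim\mathrm{Dir}(1,\alpha p_1,\ldots,\alpha p_r)$ from independent gammas with shapes $(1,\alpha p_1,\ldots,\alpha p_r)$, identify $\beta\leftrightarrow V_0$ and $\tilde G(A_j)\leftrightarrow V_j/(1-V_0)$, and merge the $V_0$-mass into the independently $p_j$-size-biased coordinate $J$---yields $W\sim\mathrm{Dir}(\alpha p_1,\ldots,\alpha p_r)$. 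Since the partition was arbitrary, $\mathrm{Law}(\tilde G)$ is a fixed point of $T$, and by the uniqueness above $\tilde G\stackrel{d}{=}G$.

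The main obstacle is the contraction step: the ambient space $\mathcal{P}(M_1(\Omega))$ is infinite-dimensional and one must choose a metric under which $T$ is strictly contractive and with respect to which the space is complete, so Banach's theorem applies cleanly. The Dirichlet manipulation in the last step is classical but requires care in coupling $(\beta,J)$ to the gamma representation; the well-definedness and self-similarity steps are routine.
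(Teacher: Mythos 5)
The paper itself offers no proof of this theorem---it is quoted from Sethuraman \cite{Sethuraman94:Stick}---so your proposal can only be compared with that classical argument, which is in fact exactly the strategy you reproduce: the self-similarity identity $G=\beta_1\delta_{\theta_1}+(1-\beta_1)G'$ with $G'\stackrel{d}{=}G$ independent of $(\beta_1,\theta_1)$, a uniqueness argument for the distributional equation, and the verification that a $\mathrm{DP}(\alpha,G_0)$ draw solves the same equation. Your individual steps are sound: $\mathbb{E}[\log(1-\beta_1)]=-1/\alpha<0$ does give $\sum_i\pi_i=1$ a.s.; the coupling bound $\mathbb{E}[1-\beta]=\alpha/(1+\alpha)$ is the correct contraction constant; and the gamma-representation computation showing $\beta e_J+(1-\beta)u\sim\mathrm{Dir}(\alpha p_1,\ldots,\alpha p_r)$ (via $u\mid J=j$ being pushed to $\mathrm{Dir}(\alpha p+e_j)$ and then mixing with weights $p_j$) is precisely the finite-dimensional Dirichlet lemma at the heart of Sethuraman's proof. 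The only fragile point is the one you flag yourself: invoking Banach's theorem on $\mathcal{P}(M_1(\Omega))$ with a Wasserstein metric built over total variation raises completeness, separability and measurability issues (the TV metric on $M_1(\Omega)$ is not separable), while the Prokhorov metric does not yield the clean factor $(1-\beta)$ under the common-component coupling. The standard repair, and essentially what Sethuraman does, is to avoid the infinite-dimensional fixed-point space altogether: either (i) project onto an arbitrary finite partition $(A_1,\ldots,A_r)$, where the equation becomes a fixed-point equation for laws on the simplex and the same coupling gives a strict contraction in an ordinary finite-dimensional Wasserstein distance, so all finite-dimensional distributions---which determine the law of a random probability measure---are pinned down; or (ii) iterate the self-similarity $n$ times to get $G\stackrel{d}{=}\sum_{i=1}^{n}\pi_i\delta_{\theta_i}+\prod_{k=1}^{n}(1-\beta_k)\,G_n$ with $G_n\stackrel{d}{=}G$ independent of the first $n$ sticks, and let the a.s.\ vanishing factor $\prod_{k=1}^{n}(1-\beta_k)$ kill the remainder, which gives uniqueness with no completeness argument at all. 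With either repair your outline is a complete and correct proof of the stated theorem.
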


By mean of the stick-breaking construction, we consider the Dirichlet Process mixture model (DPMM) of the form $\sum_{i=1}^{\infty}\pi_if_{\theta_i}(x)$, which is a mixture of distributions having the same parametric form $f$ but differing in their parameters. Like many statistical models, exact inference in the DPMM is intractable, and thereby efficient approximate inferences are needed. The most popular inference methods for DPMM are Markov chain Monte Carlo (MCMC), variational Bayesian (VB), and collapsed variational methods. Unlike most previous approaches in nonparametric Bayesian, we study a new approach by employing the Hilbert space embedding. This approach leads to the kernel-based inference for DPMM.

\section{Hilbert space embedding for Dirichlet Process mixtures}

If we consider the base measure $G_0$ to be the distribution over the parameter space $\Theta$ and let $f_{\theta}$, $\theta\sim G_0$ denote the density function parametrized by $\bm{\theta}$. Each draw from the DPMM defines the density function $F_{\bm{\theta}}(x)=\sum_{i=1}^{\infty}\pi_if_{\theta_i}(x)$. We will represent the probability distribution with density $F_{\bm{\theta}}$ as $\mathbb{P}_{\bm{\pi},\bm{\theta}}$ and represent the set of all $\mathbb{P}_{\bm{\pi},\bm{\theta}}$ by $\mathfrak{P}_{\alpha,\Theta}$.

Let $\mathcal{H}$ be the reproducing kernel Hilbert space (RKHS) with a reproducing kernel $k$. Assume that $k(x,x)$ is bounded for all $x$. Then, the Dirichlet Process Mixture Embedding (DPME) is defined as
\begin{equation*}
  \Upsilon \; : \; \mathfrak{P}_{\alpha,\Theta} \longrightarrow \mathcal{H}, \enspace
  \mathbb{P}_{\bm{\pi},\bm{\theta}} \longmapsto \int k(x,\cdot) \,\mathrm{d}\mathbb{P}_{\bm{\pi},\bm{\theta}}(x) \triangleq \sum_{i=1}^{\infty}\pi_i \int k(x,\cdot) \,\mathrm{d}f_{\theta_i}(x)
\end{equation*}

We will denote the embedding of $\mathbb{P}_{\bm{\pi},\bm{\theta}}$ by $\Upsilon[\mathbb{P}_{\bm{\pi},\bm{\theta}}]$. Since we have $\sum_{k=1}^{\infty}\pi_k=1$ almost surely and $k(x,\cdot)<\infty$ for all $x\in\mathcal{X}$, it follows that $\|\Upsilon[\mathbb{P}_{\bm{\pi},\bm{\theta}}]\|^2_{\mathcal{H}}<\infty$. Therefore, the DPME is well-defined.

Unfortunately, working directly with the DPME is cumbersome because of an infinite sum. Ishwaran and James \cite{Ishwaran01:Gibbs} made an important observation that a truncation of the stick-breaking representation at a sufficiently large $T$ already provides an excellent approximation to the full DPMM model. As a result, we propose the \emph{truncated Dirichlet Process Mixture Embedding} (tDPME): 
\begin{equation*}
  \Upsilon \; : \; \mathfrak{P}_{\alpha,\Theta,T} \longrightarrow \mathcal{H}, \enspace
  \mathbb{P}_{\bm{\pi},\bm{\theta},T} \longmapsto \int k(x,\cdot) \,\mathrm{d}\mathbb{P}_{\bm{\pi},\bm{\theta},T}(x) \triangleq \sum_{i=1}^T\pi_i \int k(x,\cdot) \,\mathrm{d}f_{\theta_i}(x)
\end{equation*}
The $\mathfrak{P}_{\alpha,\Theta,T}$ and $\mathbb{P}_{\bm{\pi},\bm{\theta},T}$ denote the truncated version of $\mathfrak{P}_{\alpha,\Theta}$ and $\mathbb{P}_{\bm{\pi},\bm{\theta}}$, respectively, where $T>0$ is a truncation level. The following theorem presents the RKHS version of the almost-sure truncation known in the nonparametric Bayesian literature.

\begin{theorem}[Almost-sure truncation]
  \label{thm:as-truncate}
  Let $\mathcal{H}$ be a reproducing kernel Hilbert space (RKHS) with a reproducing kernel $k$. Assume that $\|k(x,\cdot)\|^2_{\mathcal{H}} \leq R$ for all $x$. The following inequality holds:
  \begin{equation*}
    \left\| \Upsilon[\mathbb{P}_{\bm{\pi},\bm{\theta}}] - \Upsilon[\mathbb{P}_{\bm{\pi},\bm{\theta},T}] \right\|^2_{\mathcal{H}} \leq C\cdot\exp\left(-T/\alpha\right)
  \end{equation*}
  \noindent where $C$ is an arbitrary constant.
\end{theorem}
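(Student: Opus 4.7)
My plan is to reduce the RKHS-norm bound to a bound on the tail of the stick-breaking weights, and then exploit the Beta$(1,\alpha)$ distribution of the $\beta_k$'s to get exponential decay.

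First, I would write the difference as the tail sum. Because $\mathbb{P}_{\bm{\pi},\bm{\theta},T}$ differs from $\mathbb{P}_{\bm{\pi},\bm{\theta}}$ only in discarding the components beyond index $T$, linearity of the Bochner integral gives
\begin{equation*}
  \Upsilon[\mathbb{P}_{\bm{\pi},\bm{\theta}}] - \Upsilon[\mathbb{P}_{\bm{\pi},\bm{\theta},T}] \;=\; \sum_{i=T+1}^{\infty} \pi_i\, \mu_i, \qquad \mu_i \;\triangleq\; \int k(x,\cdot)\,\mathrm{d}f_{\theta_i}(x) \in \mathcal{H}.
\end{equation*}
Each individual mean element is bounded in RKHS norm: by Jensen's inequality for Bochner integrals, $\|\mu_i\|_{\mathcal{H}} \le \int \|k(x,\cdot)\|_{\mathcal{H}}\,\mathrm{d}f_{\theta_i}(x) \le \sqrt{R}$.

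Next, I apply the triangle inequality in $\mathcal{H}$ to get $\|\sum_{i=T+1}^{\infty}\pi_i \mu_i\|_{\mathcal{H}} \le \sqrt{R}\sum_{i=T+1}^{\infty}\pi_i$. The key algebraic identity from the stick-breaking construction is that the discarded mass collapses to a product,
\begin{equation*}
  \sum_{i=T+1}^{\infty}\pi_i \;=\; 1 - \sum_{i=1}^{T}\pi_i \;=\; \prod_{k=1}^{T}(1-\beta_k),
\end{equation*}
which is proved by an easy induction on $T$ using $\pi_i = \beta_i \prod_{k<i}(1-\beta_k)$. Squaring then yields $\|\Upsilon[\mathbb{P}_{\bm{\pi},\bm{\theta}}] - \Upsilon[\mathbb{P}_{\bm{\pi},\bm{\theta},T}]\|_{\mathcal{H}}^2 \le R\prod_{k=1}^{T}(1-\beta_k)^2$.

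It remains to show this random product decays at rate $\exp(-T/\alpha)$. The clean route is to pass to expectation: since the $\beta_k$ are i.i.d.\ $\mathrm{Beta}(1,\alpha)$, the moments factor, and a direct Beta integral gives $\mathbb{E}[(1-\beta_k)^2] = \alpha/(\alpha+2)$, so
\begin{equation*}
  \mathbb{E}\prod_{k=1}^{T}(1-\beta_k)^2 \;=\; \left(\frac{\alpha}{\alpha+2}\right)^{T} \;=\; \exp\!\left(-T\log(1+2/\alpha)\right).
\end{equation*}
Absorbing the discrepancy between $\log(1+2/\alpha)$ and $1/\alpha$ into the constant $C$ (this is where the phrase ``$C$ is an arbitrary constant'' earns its keep) produces the claimed rate $C\exp(-T/\alpha)$.

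The main obstacle, and the place where the argument is most delicate, is precisely this last step: the inequality as stated has no explicit randomness yet the left-hand side depends on the random weights $\bm{\pi}$, so one must decide whether to read it in expectation, with high probability, or as an almost-sure statement with a sample-path-dependent $C$. Under the expectation reading the argument above is clean; under the almost-sure reading one instead invokes the strong law of large numbers applied to $\frac{1}{T}\sum_{k=1}^{T}\log(1-\beta_k) \to \mathbb{E}\log(1-\beta_1) = -\tfrac{1}{\alpha}\cdot(1+o(1))$ (Harmonic-like integral against the Beta density) and absorbs the fluctuations into $C$. Either way, the bookkeeping of the rate — matching $\log(\alpha/(\alpha+c))$ to $1/\alpha$ — is the only subtle point; everything else is the triangle inequality and a telescoping identity.
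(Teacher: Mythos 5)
Your decomposition and overall strategy are essentially the paper's: isolate the tail $\sum_{i>T}\pi_i\mu_i$, bound the mean elements by $\sqrt{R}$ (the paper does this with a Jensen step inside the squared norm, keeping the tail mass to the first power, while you use the triangle inequality and square it --- both are fine, yours is in fact slightly tighter), and then argue that the leftover stick-breaking mass $\sum_{i>T}\pi_i=\prod_{k\le T}(1-\beta_k)$ decays like $\exp(-T/\alpha)$. Your second, almost-sure route via the SLLN is literally the paper's argument in different clothes: the paper writes $1-\sum_{i\le T}\pi_i=\exp(-\Gamma_T/\alpha)$ with $\Gamma_T=E_1+\cdots+E_T$, $E_i\sim\exp(1)$, which is exactly your product since $1-\beta_k=e^{-E_k/\alpha}$, and its ``$\Gamma_T\approx T$'' is your law of large numbers (note $\mathbb{E}\log(1-\beta_1)=-1/\alpha$ exactly, no $o(1)$ needed). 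Because you squared the tail mass you even get the rate $2T/\alpha$, which leaves genuine slack: almost surely $2\sum_{k\le T}\log(1-\beta_k)\le -T/\alpha$ for all $T$ beyond a path-dependent $T_0$, so the claimed bound holds with a sample-path-dependent $C$ --- arguably a cleaner way to make the paper's ``$\approx$'' honest.

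The one step I would not accept as written is the primary (expectation) branch's claim that the discrepancy between $\log(1+2/\alpha)$ and $1/\alpha$ can be ``absorbed into $C$.'' A mismatch of exponential rates is not a constant factor: $\exp(-T\log(1+2/\alpha))\le C\exp(-T/\alpha)$ for all $T$ requires $\log(1+2/\alpha)\ge 1/\alpha$, which fails for small $\alpha$ (e.g.\ $\alpha=1/2$ gives $\log 5\approx 1.61<2$), and then the ratio of the two sides grows without bound in $T$. Taking expectations also silently changes the statement, since the left-hand side is a random quantity in $\bm{\pi}$; the theorem (and the paper's proof, via the $\Gamma_T$ representation) is meant in the almost-sure sense. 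So drop the expectation branch, or at least replace the second moment by noting it only yields the weaker rate $\log(1+2/\alpha)$; keep the SLLN branch, which is both correct and the same argument as the paper's.
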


\begin{proof}
  we have
  \begin{eqnarray*}
    \left\| \Upsilon[\mathbb{P}_{\bm{\pi},\bm{\theta}}] - \Upsilon[\mathbb{P}_{\bm{\pi},\bm{\theta},T}] \right\|^2_{\mathcal{H}} 
    &=& \left\| \sum_{i=1}^{\infty}\pi_i \int k(x,\cdot) \,\mathrm{d}f_{\theta_i}(x) - \sum_{i=1}^T\pi_i \int k(x,\cdot) \,\mathrm{d}f_{\theta_i}(x) \right\|^2_{\mathcal{H}} \\
    &=& \left\| \sum_{i=T+1}^{\infty}\pi_i \int k(x,\cdot) \,\mathrm{d}f_{\theta_i}(x) \right\|^2_{\mathcal{H}} \\
    &\leq& \sum_{i=T+1}^{\infty}\pi_i \int\left\|k(x,\cdot)\right\|^2_{\mathcal{H}} \,\mathrm{d}f_{\theta_i}(x) \leq \sum_{i=T+1}^{\infty}\pi_i \int R \,\mathrm{d}f_{\theta_i}(x) \enspace .
    \end{eqnarray*}
We can see that $\int R \,\mathrm{d}f_{\theta_i}(x)$ is finite for all $i$. Thus, letting $\int R \,\mathrm{d}f_{\theta_i}(x) < C$ for all $i$ with some constant $C$ yields
    \begin{equation*}
    \sum_{i=T+1}^{\infty}\pi_i \int R \,\mathrm{d}f_{\theta_i}(x) \leq \sum_{i=T+1}^{\infty}\pi_i C = C\left(1 - \sum_{i=1}^T\pi_k\right) \approx C\cdot\exp\left(-T/\alpha\right) \enspace .
  \end{equation*}
  The last step of the proof uses the fact that $\sum_{i=1}^T\pi_i = \sum_{i=1}^T\left(\exp(-\Gamma_{i-1}/\alpha)-\exp(-\Gamma_i/\alpha)\right) = 1 - \exp(\Gamma_T/\alpha)\approx 1 - \exp(-T/\alpha)$ where $\Gamma_T=E_1+E_2+\cdots+E_T$ and $E_i\sim\exp(1)$ (cf. \cite{Ishwaran2002}).
\end{proof}

\begin{figure}[t!]
  \centering
  \includegraphics[width=2.7in]{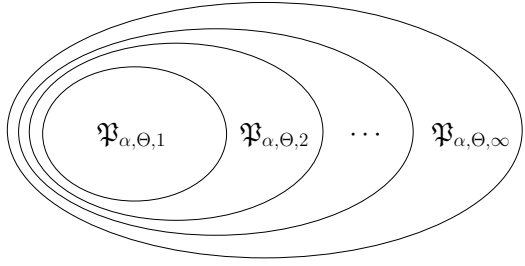}
  \caption{The hierarchical structure of the truncated DPME. The truncation level $T$ imposes a hierarchical structure on the class of distributions $\mathfrak{P}_{\alpha,\Theta,T}$. As $T$ increases, the set $\mathfrak{P}_{\alpha,\Theta,T}$ enlarges, giving more flexibility to the model.}
  \label{fig:hierarchy}
\end{figure}

Theorem \ref{thm:as-truncate} asserts that the truncated DPME is close in RKHS norm to the true DPME with a sufficiently large truncation level $T$. Consequently, by working with the truncated DPME instead of the true DPME, we are not losing much information. Moreover, the bound also suggests how to choose $T$. That is, for the error to be smaller than $\delta$, one must choose $T$ such that $T > \alpha\ln(\delta/C)$. The effect of setting different truncation level $T$ in the DPME can be seen in Figure \ref{fig:hierarchy}. 

\subsection{Optimization}
 
Given observation $x_1,x_2,\ldots,x_m$, we would like to find $\mathbb{P}_{\bm{\pi},\bm{\theta},T}$ that is as close as possible to the underlying distribution $\mathbb{P}$ of the observation. To accomplish this, we employ the usual Hilbert space embedding of $\mathbb{P}$ given by $\mu_{\mathbb{P}}=\mathbb{E}_{x\sim\mathbb{P}}[k(x,\cdot)]$. The empirical estimate of $\mu_{\mathbb{P}}$ can be computed from observation as $\hat{\mu}_X=\frac{1}{m}\sum_{k=1}^m k(x_k,\cdot)$. Then, the optimization problem can be cast as follow:
\begin{align}
  \label{eq:opt1}
  \underset{\bm{\pi}\in\mathbb{R}^{T}}{\min} \quad \|\hat{\mu}_X - \Upsilon[\mathbb{P}_{\bm{\pi},\bm{\theta},T}]\|^2_{\mathcal{H}} && \text{subject to} \quad \bm{\pi}^{\mathsf{T}}\mathbf{1} = 1, \pi_i \geq 0 \enspace .
\end{align}
To prevent overfitting, we introduce a regularizer $\Omega(\bm{\pi})=\frac{1}{2}\|\bm{\pi}\|^2$ with a regularization constant $\varepsilon > 0$. Substituting $\hat{\mu}_X$ and $\Upsilon[\mathbb{P}_{\bm{\pi},\bm{\theta},T}]$ back into \eqref{eq:opt1} yields a quadratic programming (QP) for $\bm{\pi}$:
\begin{align*}
  \underset{\bm{\pi}\in\mathbb{R}^{T}}{\min} \quad \frac{1}{2}\bm{\pi}^{\mathsf{T}}\left(\mathbf{S} + \varepsilon \mathbf{I}\right)\bm{\pi} - \mathbf{R}^{\mathsf{T}}\bm{\pi} && \text{subject to} \quad \bm{\pi}^{\mathsf{T}}\mathbf{1} = 1, \pi_i \geq 0 \enspace ,
\end{align*}
\noindent where $\mathbf{I}$ is the identity matrix, $\mathbf{S}\in\mathbb{R}^{T\times T}$ and $\mathbf{R}\in\mathbb{R}^{T}$ are given by $\mathbf{S}_{ij}=\langle \mu[f_{\theta_i}],\mu[f_{\theta_j}]\rangle_{\mathcal{H}}$ and $\mathbf{R}_j = \langle \hat{\mu}_X,\mu[f_{\theta_j}]\rangle_{\mathcal{H}}$, respectively, and $\mu[f_{\theta_i}] = \int k(x,\cdot) \,\mathrm{d}f_{\theta_i}(x)$. Note that our optimization problem is similar to the one in \cite{Song08:DEKMM}. Thus, due to space constraint, we ask the readers to consult \cite{Song08:DEKMM} on how to compute $\mathbf{S}$ and $\mathbf{R}$ as well as the detail on how to perform an optimization.

The optimization problem we use here is conceptually similar to the variational methods for DPMM \cite{Blei05:VI}. That is, we are minimizing the distance between the approximate quantity $\mathbb{P}_{\bm{\pi},\bm{\theta},T}$ and the true quantity $\mathbb{P}$. Moreover, both MCMC and VB require access to the latent variables associated with observations in order to perform an inference, whereas our approach does not require access to the latent variable whatsoever during an inference. The values of the latent variables, on the other hand, are computed as a postprocessing step.

\section{Discussions}

We are investigating some open questions related to the proposed kernel-based inference of the DPMM. For example, it is vital to understand how the solution of the above optimization problem relates to the solution of the standard inference methods such as maximum likelihood and MAP of the DPMM. Is there a kernel $k$ for which these solutions coincide? What is the effect of choosing different kernel $k$? and what is the connection of our approach to the basic k-mean algorithm? The answers to these questions will be the mutual benefit of researchers in kernel methods and Bayesian nonparametrics.

\footnotesize{
\bibliographystyle{abbrv}
\bibliography{kdpmm}}

\end{document}